\DeclareMathOperator*{\argmax}{arg\,max}
\DeclareMathOperator*{\argmin}{arg\,min}
\DeclareMathOperator{\rank}{rank}
\newtheorem{prop}{Proposition}
\def\BibTeX{{\rm B\kern-.05em{\sc i\kern-.025em b}\kern-.08em
    T\kern-.1667em\lower.7ex\hbox{E}\kern-.125emX}}
\begin{document}

\title{Neural Rule Ensembles: Encoding Sparse Feature Interactions into Neural Networks
}

\author{\IEEEauthorblockN{Gitesh Dawer}
\IEEEauthorblockA{\textit{CoreML Group} \\
\textit{Apple Inc.}\\
Cupertino, California, USA \\
dawergitesh@gmail.com}
\and
\IEEEauthorblockN{Yangzi Guo}
\IEEEauthorblockA{\textit{Department of Mathematics} \\
\textit{Florida State University}\\
Tallahassee, Florida, USA \\
yguo@math.fsu.edu}
\and
\IEEEauthorblockN{Sida Liu}
\IEEEauthorblockA{\textit{Department of Statistics} \\
\textit{Florida State University}\\
Tallahassee, Florida, USA \\
sida.liu@stat.fsu.edu}
\and
\IEEEauthorblockN{Adrian Barbu}
\IEEEauthorblockA{\textit{Department of Statistics} \\
\textit{Florida State University}\\
Tallahassee, Florida, USA \\
abarbu@stat.fsu.edu}
}

\maketitle

\begin{abstract}
Artificial Neural Networks form the basis of very powerful learning methods. 
It has been observed that a naive application of fully connected neural networks to data with many irrelevant variables often leads to overfitting. 
In an attempt to circumvent this issue, a prior knowledge pertaining to what features are relevant and their possible feature interactions can be encoded into these networks.  
In this work, we use decision trees to capture such relevant features and their interactions  and define a mapping to encode extracted relationships into a neural network. 
This addresses the initialization related concerns of fully connected neural networks. At the same time through feature selection it enables learning of  compact  representations compared to state of the art tree-based approaches. Empirical evaluations and simulation studies show the superiority of such an approach over  fully connected neural networks and tree-based approaches.
\end{abstract}

\section{Introduction}
Tree based ensemble methods have emerged as being one of the most powerful learning methods \cite{friedman2001greedy,breiman2001random} owing to the simplicity and transparency of trees, combined with an ability to explain complex data sets. 

Predictive models based on rules have gained momentum in the recent years \cite{nalenz2016horseshoe},\cite{blaszczynski2016multi},\cite{de2017best},\cite{article},\cite{deng2019interpreting}. 
One of the simplest rule based approaches was proposed in \cite{Quinlan:1993:CPM:152181} where a single decision tree is decomposed into a set of rules. 
Each such  rule is pruned by removing nodes that improved its estimated accuracy. 
This is followed by sorting the pruned rules in the ascending order of their accuracy. Prediction at any new example is obtained using a single activated rule that is highest in the sorted list.  
RuleFit \cite{friedman2008predictive} is another popular rule based predictive model. It involves generating a large pool of rules using existing fast tree growing procedures. 
The coefficients of these rules are fit through  a regularized regression. \cite{softrules} replaces the hard rules in RuleFit with soft rules using a logistic transformation. 
\cite{Dembczynski2008} employs gradient boosting using rules as a base classifiers and rules are added iteratively to an ensemble by greedily minimizing the negative log-likelihood function.  
The major concern in all of these approaches is that the activated region of rules is fixed and does not allow for any training. 
Since the support is aligned along the feature axes, a large number of rules would be required to approximate oblique decision boundaries and therefore, would result in a loose representation of the prediction function. 

Another line of work focuses on restructuring decision tree into multi-layered neural networks with sparse connections and fewer restrictions on the inclination of decision boundaries. 
One such mapping was explored in the works of \cite{sethi1991decision} which was later used by \cite{neuralforests} for every tree in a random forest.  
The mapping in \cite{sethi1991decision} replaces the Heaviside unit step function with a hyperbolic tangent activation which is known to suffer from the vanishing gradients problem. 
Also, it is not clear how to choose the hyperparameters of the hyperbolic tangent activation, which heavily dictate the initialization and the magnitude of gradients. 

Some works at the intersection of decision trees and neural networks replace every decision node with a neural network. One such study was explored by \cite{deepDecision},  who learns differentiable split functions
to guide inputs through a tree. The conditional networks from \cite{Ioannou2016DecisionFC}
also use trainable routing functions to perform conditional transformations
on the inputs which enables them to transfer computational
efficiency benefits of decision trees into the domain of convolutional networks. This appears like an ensembling of neural networks but structured in a hierarchical fashion. 

In this paper, we present a novel method called Neural Rule Ensembles (NRE) for encoding the feature interactions captured by a single decision tree into a neural network. 
We discuss some training aspects of the algorithm and perform empirical evaluations on $19$ binary classification datasets from the Penn Machine Learning Benchmark (PMLB) \cite{Olson2017PMLB}. 
To evaluate the statistical significance of the results, we use two statistical tests: Wilcoxon signed-rank test and the sign test, and individually compare NRE with Random Forests (RF), Gradient Boosted Trees (GB) and Artificial Neural Networks (ANN).

\section{Preliminaries and Notations}
We will work on regression and binary classification problems, where we are given training examples $\{(\mathbf{x_i},y_i)\in {\mathbb R}^p\times {\mathbb R},i=1,...,N\}$ and we need to find a prediction function $f_{\boldsymbol{\beta}}:{\mathbb R}^p\to {\mathbb R}$ parameterized by a vector $\boldsymbol{\beta}$, such that $f_{\boldsymbol{\beta}}(\mathbf{x_i})$ agrees with $y_i$ as much as possible. For example, for linear models, the prediction function is $f_{\boldsymbol{\beta}}(\mathbf{x})=\boldsymbol{\beta}^T\mathbf{x}$ and $\boldsymbol{\beta}\in {\mathbb R}^p$.
The agreement between $f_{\boldsymbol{\beta}}(\mathbf{x})$ and $y$ on the training examples is measured by a loss function.  
\vspace{-2mm}
\begin{equation}
L(\boldsymbol{\beta})=\sum_{i=1}^N \ell(f_{\boldsymbol{\beta}}(\mathbf{x_i}),y_i)+s(\boldsymbol{\beta})\label{eq:loss}
\vspace{-2mm}
\end{equation} that should be minimized, where $s(\boldsymbol{\beta})$ is a penalty such as the shrinkage $s(\boldsymbol{\beta})=\rho \|\boldsymbol{\beta}\|^2$ which helps with generalization. 

The loss $\ell(u,y)$ depends on the problem. For regression, it could be the square loss $\ell(u,y)=(u-y)^2$. For binary classification (when $y\in \{-1,1\}$), it could be the logistic loss $\ell(u,y)=\log(1+\exp(-uy))$, the hinge loss $\ell(u,y)=\max(1-uy,0)$, or other loss functions.

\subsection{Rule Generation}
For an input $\textbf{x}\in {\mathbb R^p}$ with real-valued continuous attributes $(x_1,x_2,...,x_p)$  , one can express a conjunctive rule in the following mathematical form: 
\begin{equation}
r(\textbf{x}) = c \prod_{j=1}^{p}  I(x_j \in S_j) \label{eq:conrule}
\end{equation}
where $I(\cdot)$ is an indicator function, $c$ is the activation value and $S_j$ is some subset of possible values for an attribute $x_j$. 

A decision tree can be regarded as a collection of conjunctive rules where each path from the root to a terminal node defines one such rule.
 Specifically, a regression tree with $m$ terminal nodes can be represented as 
\vspace{-1mm}
\begin{equation}
T(\textbf{x}) = \sum_{k=1}^{m} r_k(\textbf{x})
\vspace{-1mm}
\end{equation}
Non-zero values of $r_k(\textbf{x})$ correspond to a hyper-rectangle in the input feature space. These $m$ hyper-rectangles are non-overlapping with each other and collectively define a $m$ partitioning of the feature space.

 In cases where all the features involved in a decision tree-induced rule  are continuous variables, the rule in \eqref{eq:conrule} can be reduced to a much simpler form. Let $\textbf{x}_{R}$ be a set of features involved in a rule $r(\textbf{x})$, one can now rewrite the expression for a conjunctive rule in \eqref{eq:conrule} as follows.
\begin{equation}
r(\textbf{x}) = c  \prod_{x_j\hspace{1pt} \in \hspace{2pt}\textbf{x}_{R}} \text{H}(w_j x_j + a_j) \label{eq:rule1}
\vspace{-1mm}
\end{equation}

where $\text{H}(\cdot)$ is a Heaviside step function with $\text{H}(0)=0$,  $c$ is the value at an associated terminal node, $w_j$ is either $+1$ or $-1$ and $a_j$ is the split threshold for feature $x_j$ if $w_j=-1$, and is the negative of the split threshold otherwise.

In order to create an initial pool consisting of a large number of rules, we first invoke existing fast algorithms such as Random Forests and Gradient Boosted Trees to generate tree ensembles. 
In the subsequent step, each of the tree thus produced is decomposed into a set of conjunctive rules as described above. 

\subsection{Margin Maximizing Rules}\label{sec:MMR}

In order to perform either an implicit or explicit selection of rules, we would need a metric to quantify their importance, which can then be used to rank them in the order of their relevance. 
One such metric that can be employed is the hinge loss, also referred to as the maximum margin loss, where rules with lower values of the loss function will have higher relevance. Mathematically, the expression for this loss function is given as
\vspace{-1mm}
\begin{equation}
I\bigg(\textbf{y},\frac{ {\textbf{r}_k}}{\norm{{\textbf{r}_k}}}\bigg) = \sum_{i=1}^{N} \max \bigg(0,1 - y_i\frac{ r_k(\textbf{x}_i)}{\norm{{\textbf{r}_k}}}\bigg)\label{eq:maxMarg}
\vspace{-1mm}
\end{equation}

where $y_i \in \{-1,+1\} \hspace{1mm}\forall \hspace{1mm}i$. The quantity  $y\frac{ r_k(\textbf{x})}{\norm{{\textbf{r}_k}}}$, known as the \textit{margin} or confidence in the prediction, is positive for the correct prediction and is negative in case of the wrong one. 
Notice that,  in the absence of rule scaling  $\norm{{\textbf{r}_k}}$, the scale  of $r_k(\textbf{x})$  could be artificially chosen to make the confidence $y\hspace{1mm} r_k(\textbf{x})$ arbitrarily large which in that case would  render the definition of \textit{margin} useless. 

Let $n_k$ be the number of training examples that activate the rule $r_k$ where $n_k^{+}$ of them belong to the positive class and the remaining $n_k^{-}$ come from the negative class. Using \eqref{eq:rule1}, the euclidean norm of the rule can thus be computed as
\vspace{-1mm}
$$\norm{{\textbf{r}_k}}_2 =|c| \sqrt{n_k} 
\vspace{-2mm}
$$
For any training input $\textbf{x}$, the activation value of a rule $r_k(\textbf{x})$ is either $0$ or $c$. Consequently, the normalized value is either $0$ or is given by 
\vspace{-1mm}
$$ \frac{ r_k(\textbf{x})}{\norm{{\textbf{r}_k}}}  = \frac{c}{|c|}\frac{1}{\sqrt{n_k}} 
\vspace{-1mm}
$$
which implies that the \textit{margin} is unconditionally bounded from above by $1$ or more precisely, 
\vspace{-1mm}
$$ -1 \leq y\frac{ r_k(\textbf{x})}{\norm{{\textbf{r}_k}}} \leq 1, \text{ so }
 0 \leq 1 - y\frac{ r_k(\textbf{x})}{\norm{{\textbf{r}_k}}} \leq 2.
 \vspace{-1mm}
$$
This allows us to simplify the expression for the hinge loss given in \eqref{eq:maxMarg} to
\vspace{-1mm}
\begin{align*}
 I\bigg(\textbf{y},\frac{ {\textbf{r}_k}}{\norm{{\textbf{r}_k}}}\bigg) & = \sum_{i=1}^{N} \Bigg(1 - y_i\frac{ r_k(\textbf{x}_i)}{\norm{{\textbf{r}_k}}}\Bigg) \\\vspace{1mm}
 & =  N - \sum_{i=1}^{N} y_i\frac{ r_k(\textbf{x}_i)}{\norm{{\textbf{r}_k}}} 
 =  N -  \hspace{-4mm}\sum_{ \{ i :r_k(\textbf{x}_i) \neq 0 \}} y_i \frac{ r_k(\textbf{x}_i)}{\norm{{\textbf{r}_k}}}
\vspace{-1mm}
\end{align*}
Since both $r_k(\textbf{x})$ and $-r_k(\textbf{x})$ are potentially valid rules, the quantifying metric becomes
\vspace{-1mm}
\begin{equation}
\argmin_{k}  I\bigg(\textbf{y},\pm \frac{ {\textbf{r}_k}}{\norm{{\textbf{r}_k}}}\bigg)  = 
\argmax_k \hspace{1mm}m_k^2 \label{eq:Ikmk}
\vspace{-1mm}
\end{equation}
where
\vspace{-1mm}
\begin{equation}
 m_k = \sum_{ \{ i :r_k(\textbf{x}_i) \neq 0 \}} y_i \frac{ r_k(\textbf{x}_i)}{\norm{{\textbf{r}_k}}} \label{eq:mkhingeloss}
\vspace{-1mm}
\end{equation}
This means that the rules can simply be sorted in the decreasing order of  the scores $m_k^2$ to obtain a ranking in the order of their relevance. 
In other words, higher values of $m_k^2$ correspond to higher  relevance. Let us further simply the expression given in \eqref{eq:mkhingeloss} to 
\begin{equation}
m_k  = \frac{c}{|c|}\frac{(n_k^{+} - n_k^{-})}{\sqrt{n_k}}, \text { thus }
m_k^2  = \frac{(n_k^{+} - n_k^{-})^2}{n_k}. \label{eq:mksquare} 
\end{equation}

\section{Rule Generation Limitation of Conventional Decision Tree} \label{sec:limitationRSA}

The rule generation procedure conducted by conventional decision tree uses only one feature per node for recursive binary partitioning which corresponds to the fact that all the slices or partitions in the feature space are now either perpendicular or (inclusive) parallel to the feature axes. 
In other words, such models will have difficulties in approximating oblique decision boundaries in the feature space. 

Let us consider a linearly separable dataset as shown in Figure \ref{fig:limitRER}(a), where the decision boundary is inclined at an angle of $45^{\circ}$ with either of the feature axes. 
Owing to the aforementioned rigidity, it can be seen from Figures \ref{fig:limitRER} (b) and (c) that  the only way to improve performance is to keep adding more rules to the ensemble. The inability to evolve shapes of the rules seems highly restrictive and challenges the goal of achieving compact representations. Hence, we would not want to restrict the activated region of a rule $r(\textbf{x})$ to just a hyper-rectangle. This motivates the definition of a neural rule with  trainable support. 

\begin{figure}[h!]
	\vspace{-3mm}
	\centering
	\begin{center}	
	\begin{tabular}{ccc}	
	\includegraphics[width=0.26\linewidth]{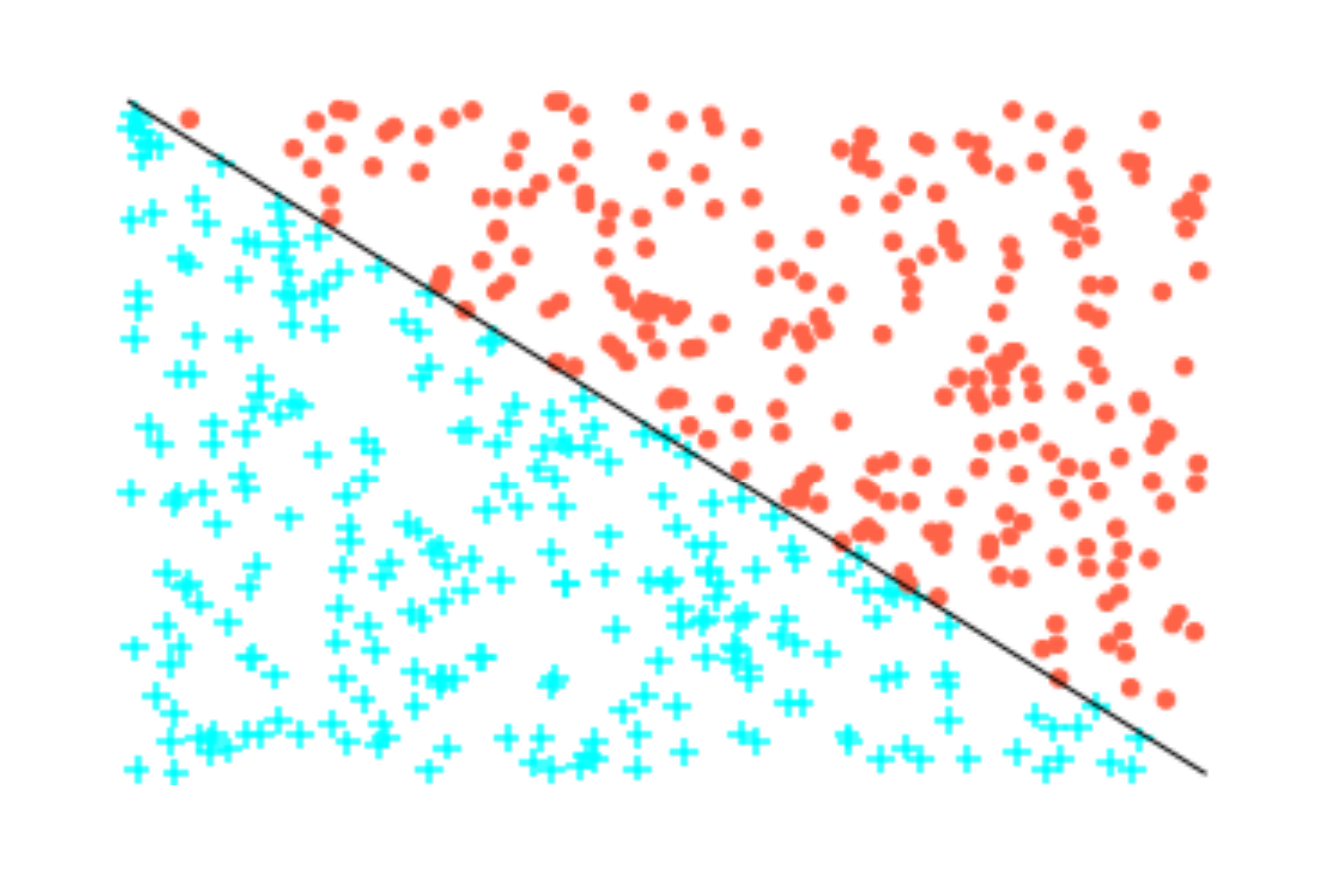}
	&\includegraphics[width=0.26\linewidth]{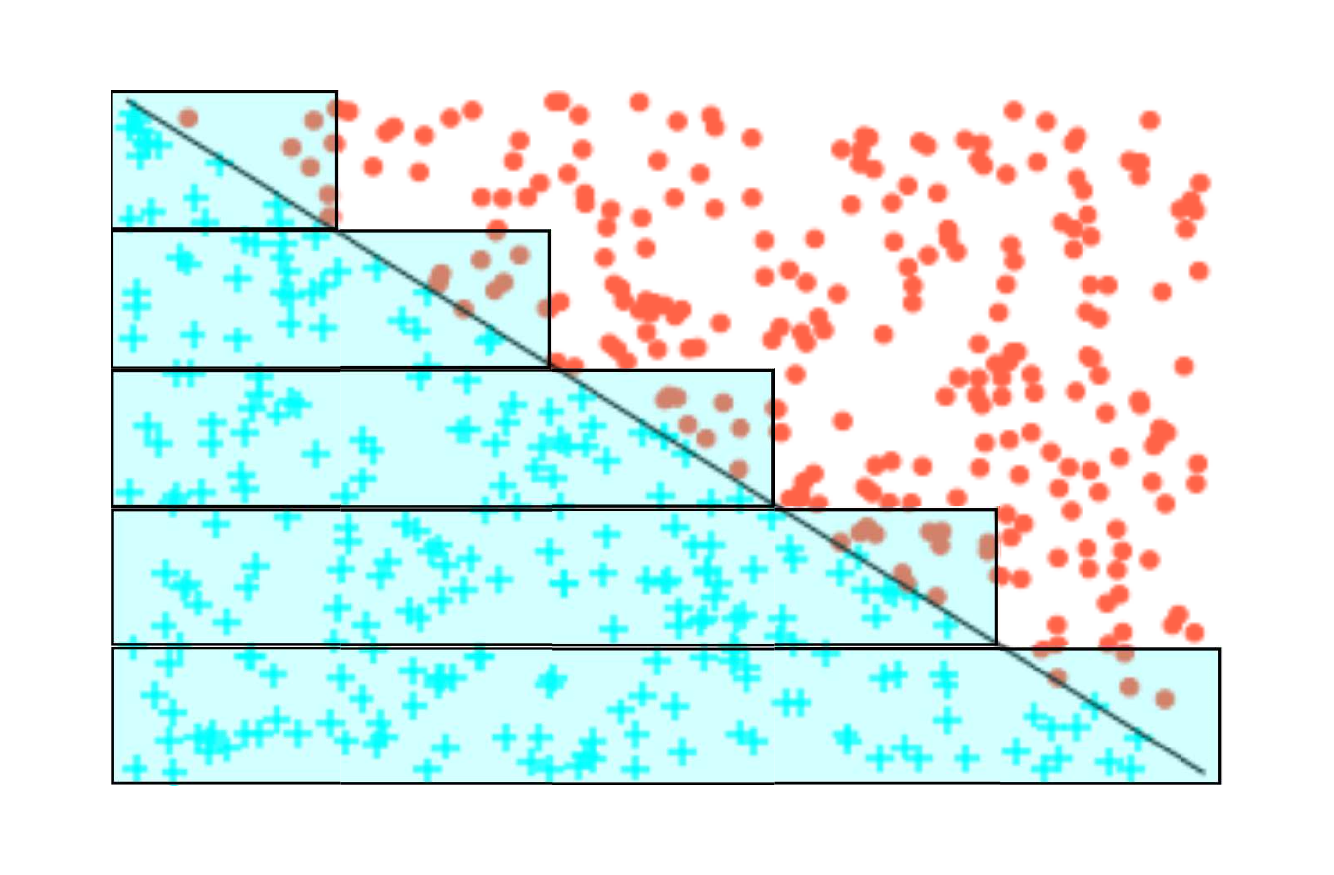}
	&\includegraphics[width=0.26\linewidth]{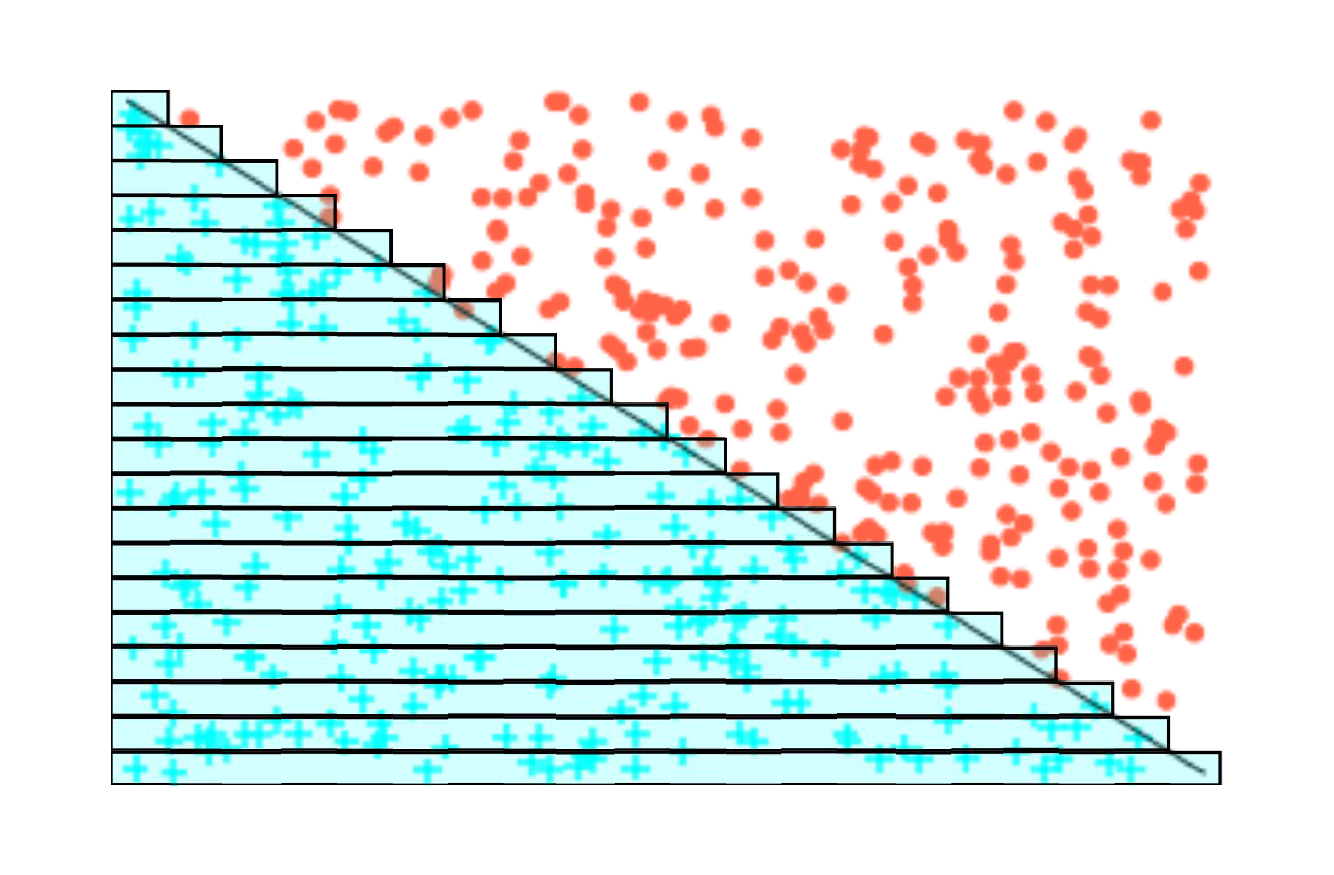}\\
	Linear Separable & Ensemble 5 Rules & Ensemble 20 Rules
	\end{tabular}
	\end{center}
	\vspace{-4mm}
	\caption{Limitations of Conventional Ensemble of Rules in approximating linearly separable datasets}\label{fig:limitRER}
	\vspace{-1mm}
\end{figure}

\section{Neural Rule Ensembles} \label{sec:NRInit}

In this part, we introduce a new form of the conjunctive rules  \eqref{eq:rule1} called Neural Rules. 
In order to see how a decision tree based rule inspires the design of a neural rule, let us revisit the expression of a conjunctive rule specifically for a decision tree based rule given in \eqref{eq:rule1}. 
\begin{equation}
r(\textbf{x}) = c  \prod_{x_j\hspace{1pt} \in \hspace{2pt}\textbf{x}_{R}} \text{H}(w_j x_j + a_j) \nonumber
\end{equation}
where $\textbf{x}_{R}$ is a set of features involved in a rule $r(\textbf{x})$, $\text{H}(\cdot)$ is a Heaviside step function with $\text{H}(0)=0$,  $c$ is the value at an associated terminal node, $w_j$ is either $+1$ or $-1$ and $a_j$ is the split threshold for feature $x_j$ if $w_j=-1$, and is the negative of split threshold otherwise.

Rules extracted from a decision tree involve only one feature for every node. In a neural rule, we modify \eqref{eq:rule1} to now connect each node of a given rule with all the features used in the corresponding decision tree. 
Let $\textbf{x}_{T}$ be a vector of all features  used in the decision tree $T(\textbf{x})$ without repetitions. 
With this modification, we can now have oblique decision boundaries in the feature subspace spanned by $\textbf{x}_{T}$. The updated expression of the rule looks as follows.

\vspace{-2mm}
\begin{equation}
r(\textbf{x}) = c  \prod_{\{j: \hspace{1pt}x_j \in \hspace{1pt}\textbf{x}_{R} \}}\text{H}(\textbf{w}_j^T {\textbf{x}}_T + a_j)  \label{eq:rule2}
\vspace{-2mm}
\end{equation}
Denote the ReLU operation as $\sigma(x) = \max(0,x)$. Next, we observe that a Heaviside step function with $\text{H}(0)=0$ is invariant to the ReLU transformation of an input $\text{H}(x)  = \text{H}(\sigma(x))$. 
Also note that the product of several Heaviside step functions can be represented using a single Heaviside step function and the minimum pooling operation.
\vspace{-1mm}
\[
\prod_k \text{H}(x_k)  =  \text{H}(\underset{k}{\min} \hspace{3pt}x_k) 
\vspace{-1mm}
\]
Using these identities, the rule in \eqref{eq:rule2} becomes,
\vspace{-1mm}
\begin{align}
r(\textbf{x}) &= c \prod_{\{j: \hspace{1pt}x_j \in \hspace{1pt}\textbf{x}_{R} \}}\text{H}(\sigma(\textbf{w}_j^T {\textbf{x}}_T + a_j))\\
& = c \cdot \text{H}(\underset{\{j: \hspace{1pt}x_j \in \hspace{1pt}\textbf{x}_{R} \}}{\min} \sigma(\textbf{w}_j^T {\textbf{x}}_T + a_j)) \label{eq:rule3}
\vspace{-2mm}
\end{align}

Since the derivative of a step function is zero, the gradients of all the learnable parameters will stay zero unless some modification is made. In order to be able to jointly train all the weights and splitting thresholds of all the nodes in a rule, we switch the Heaviside step function in previous equations with an identity function. This gives us the neural rule in its final form as
\vspace{-2mm}
\begin{equation}
r({\textbf{x}}) = c \underset{\{j: \hspace{1pt}x_j \in \hspace{1pt}\textbf{x}_{R} \}}{\min} \sigma(\textbf{w}_j^T {\textbf{x}}_T + a_j)\label{eq:neuralrule}
\vspace{-1mm}
\end{equation}

\subsection{Initialization} \label{sec:nreinit}

We now discuss an initialization of a neural rule corresponding to any given rule obtained from a decision tree. 

First, we  make a list of all the  features involved in that tree. All those features along with a bias unit are the input layer of a neural rule. 
The number of hidden units in the first hidden layer of a neural rule equals the number of decision nodes of a tree-induced rule, with one-to-one correspondence between them. 
The connection weight between the input feature and the hidden unit is $0$ if the corresponding decision node of that hidden unit does not involve the feature under consideration. 
It is $-1$ if the corresponding decision node utilizes that feature and traverses its left child along the rule path and is $+1$ otherwise. 

The magnitude of the bias for every hidden unit is given by the absolute value of the splitting threshold utilized in the corresponding decision node. 
The sign of the bias for a hidden unit is positive if the corresponding decision node traverses its left child along the rule path, otherwise it is negative. 

We use Figure \ref{fig:map} to illustrate one such mapping. 
Figure \ref{fig:map} (a) shows a decision tree with four rules. 
Figure \ref{fig:map} (b) shows a neural rule corresponding to the rule with terminal label $2$ (red colored branch) of the decision tree.  
All the bold lines in a neural rule represent trainable parameters with their initial values displayed alongside in Figure \ref{fig:map} (b). 
The red bold lines in a neural rule carry non-zero initial weights and have their counterparts in the decision tree whereas black bold lines represent new connections with zero initialization.

\begin{figure}[t]
\begin{center}
\begin{tabular}{cc}
\includegraphics[width=0.2\textwidth]{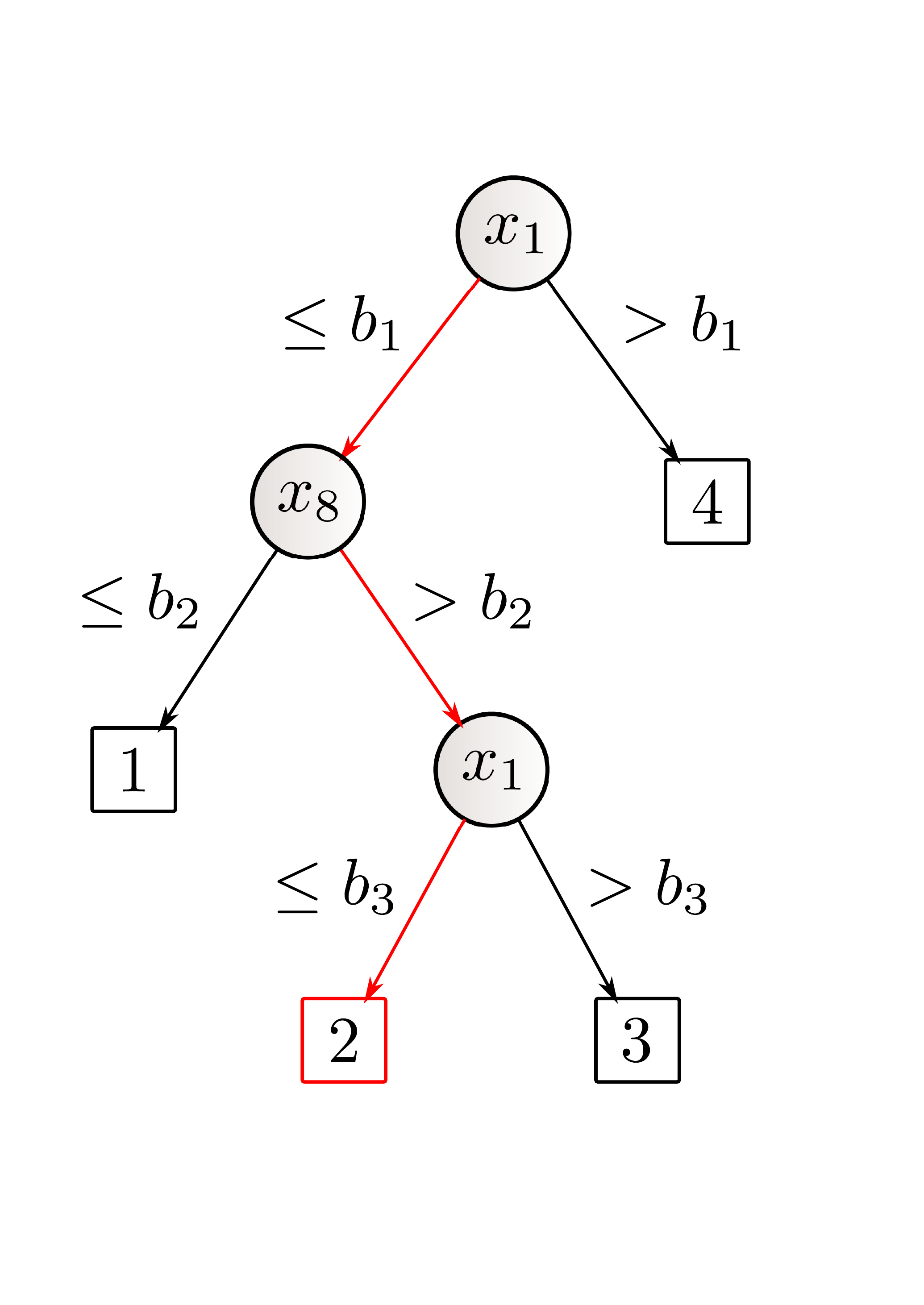}
&\includegraphics[width=0.2\textwidth]{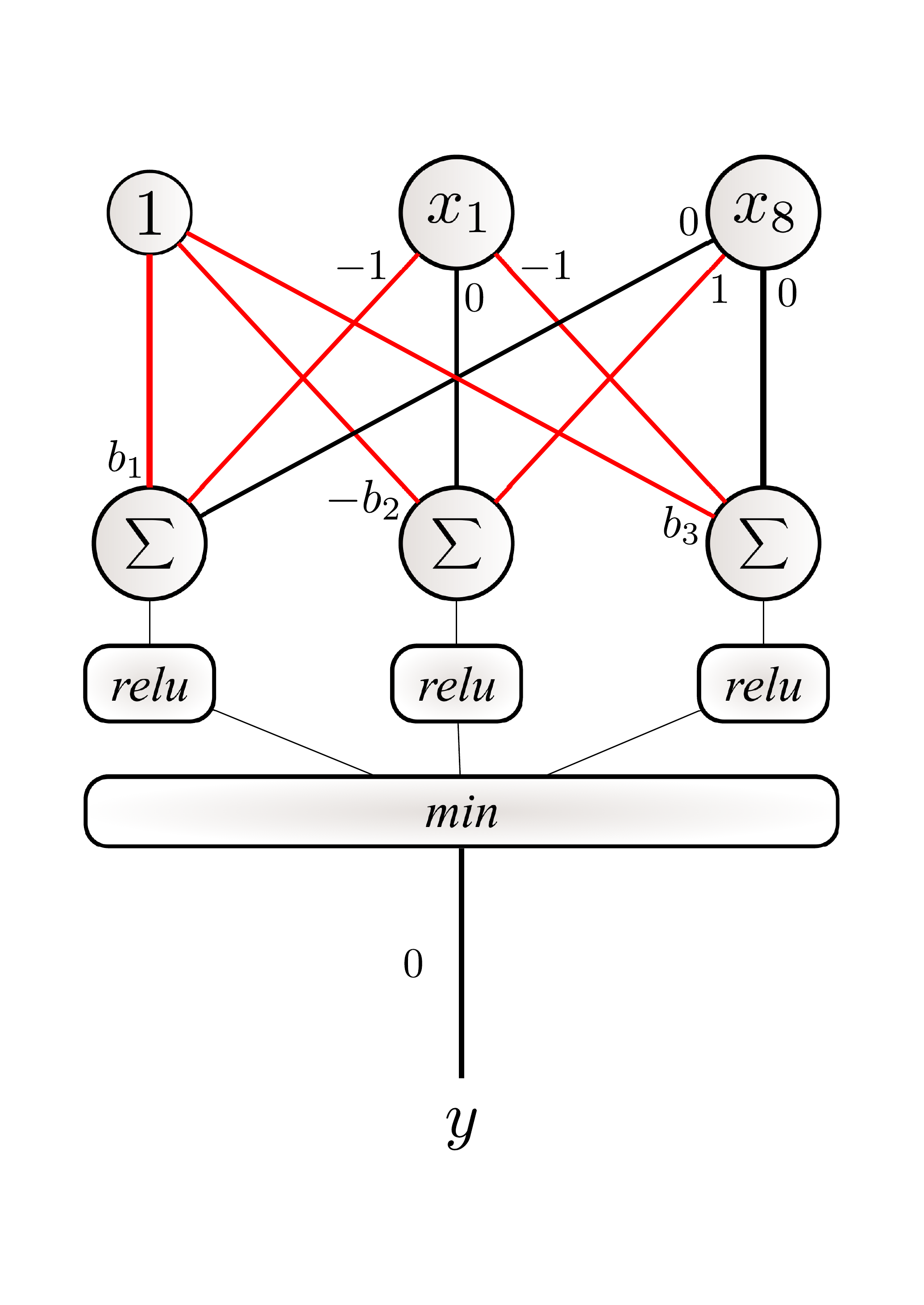}\vspace{-6mm}\\
A Decision Tree & A Neural Rule
\end{tabular}
\end{center}
	\caption{Mapping a  Tree-induced Rule, with terminal label 2, into a Neural Rule}\label{fig:map}
\end{figure}

We observed that the support of a proposed neural rule is a convex set. 
In order to allow for the rules to assume complicated non-convex shapes in the feature space, we extend the definition of a neural rule by stacking a new hidden layer with the same number of hidden units as the previous one. 
We refer to such a modification of the neural rule as a deep neural rule. 
Since we need to preserve the support of a tree-induced rule while mapping it into a corresponding deep neural rule, we  use an identity transformation for initializing the parameters of this new hidden layer as depicted in  \figref{fig:deepneuralrule}.

\subsection{Characteristics}
\noindent{\bf Trainable Support}.
Each $\textbf{w}^T {\textbf{x}}_T + a = 0$ in the equation \eqref{eq:neuralrule} represents a hyperplane in the feature subspace with the corresponding upper half space given by $ \sigma(\textbf{w}^T {\textbf{x}}_T + a) > 0$. 
The application of the $\min$ operation evaluates  the intersection of these upper half spaces and thus, defines the activated region of a rule. 
For an input $\textbf{x}$ that lies on the upper half space of the plane given by $\textbf{w}^T {\textbf{x}}_T + a = 0$, $\sigma(\textbf{w}^T {\textbf{x}}_T + a)$ is proportional to the shortest euclidean distance of the input to that hyperplane. 
This quantifies the  margin or level of confidence in the prediction and the further the input lies from the hyperplane in its upper half space, the more confident it becomes in its prediction of that input. 
During training using backpropagation, the hyperplane is rotated, shifted and scaled in order to maximize the expected margin of the inputs. 
Because of the $\min$ pooling operation, each input contributes in updating the parameters of only the hyperplane that predicts the least margin for it at that training step among all the hyperplanes  involved in a rule. 
The rationale here is to maximize the margin of an input only from the least confident hyperplane.

%

\begin{figure}[t]
	\begin{center}	
	\begin{tabular}{cc}
	\includegraphics[width=0.2\textwidth]{decisiontree.pdf}
	&\includegraphics[width=0.2\textwidth]{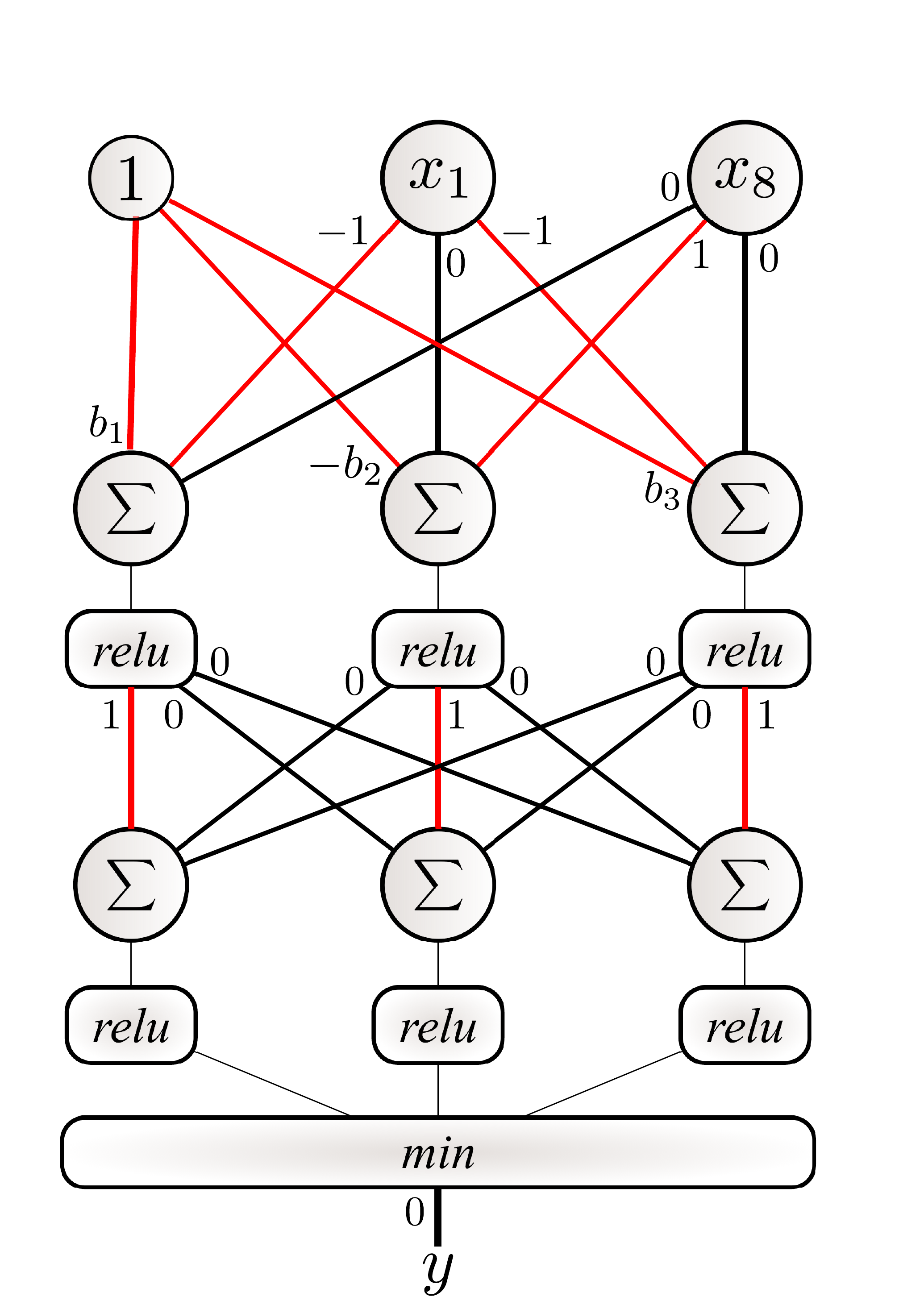}\vspace{-1mm}\\
	A Decision Tree & A Deep Neural Rule
	\end{tabular}
	\end{center}
	\caption{Mapping a Tree-induced Rule, with terminal label 2, into a Deep Neural Rule}\label{fig:deepneuralrule}
\end{figure}

\noindent{\bf Restricted Gradients}.
For an input $\textbf{x}$ that does not belong to the activated region of a rule, $ \underset{j}{\min}\hspace{2pt}( \textbf{w}_j^T {\textbf{x}}_T + a_j)$ would be less than or equal to zero, which implies zero gradients of all the trainable parameters as the derivative of a ReLU activation for negative inputs is zero. 
This suggests that only the training examples lying inside the activated region are responsible for modifying the shape of this region. 
In order to maximize their margins, activated examples try to push or pull the rule boundaries depending on the sign of their class membership and the sign of weight, $c$. 
If both of these signs agree then the corresponding training examples push  the rule boundary outwards, which expands the region and as a result, brings in more training examples. 
Otherwise, if the signs do not match then those contradictory examples will pull in the rule boundary to get themselves out of it and thereby, shrink the region.

\noindent{\bf Compact Convex Support}.\label{sec:convexsupport}
Let $C$ denote the support of a neural rule given by equation \eqref{eq:neuralrule} and defined as  ${\{ \textbf{x}  :r(\textbf{x}) \neq 0 \}}$. We show that $C$ is a compactly supported convex set. 
\begin{prop}\label{prop:convexSupport}
	For  any $\textbf{x},\textbf{z} \in C$, the convex combination of $\textbf{x}$ and $\textbf{z} $ satisfies
	$$\theta \textbf{x} + (1-\theta) \textbf{z} \in C$$
	where $\theta \in {\mathbb R}$ with $0\leq \theta \leq 1$
\end{prop}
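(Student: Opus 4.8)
The plan is to reduce the nonzero-support condition to a system of strict linear inequalities and then invoke the elementary fact that an intersection of half-spaces is convex. First I would observe that since the activation value $c$ is nonzero and the ReLU $\sigma$ is nonnegative, we have $r(\textbf{x}) \neq 0$ if and only if $\min_{\{j:\,x_j\in\textbf{x}_R\}} \sigma(\textbf{w}_j^T \textbf{x}_T + a_j) > 0$. Because a minimum of nonnegative quantities is strictly positive exactly when \emph{every} quantity is, this is in turn equivalent to $\textbf{w}_j^T \textbf{x}_T + a_j > 0$ for all indices $j$ in the rule. Hence the support can be rewritten as the finite intersection $C = \bigcap_{\{j:\,x_j\in\textbf{x}_R\}} \{ \textbf{x} : \textbf{w}_j^T \textbf{x}_T + a_j > 0 \}$ of open half-spaces.

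Given $\textbf{x},\textbf{z}\in C$ and $\theta\in[0,1]$, I would then verify the defining inequality directly for the convex combination, rather than merely citing that intersections of convex sets are convex. Fixing an index $j$, and using that the selection of the subvector $\textbf{x}_T$ is linear together with the trivial identity $a_j = \theta a_j + (1-\theta) a_j$, the affine form evaluated at $\theta\textbf{x}+(1-\theta)\textbf{z}$ equals $\theta(\textbf{w}_j^T \textbf{x}_T + a_j) + (1-\theta)(\textbf{w}_j^T \textbf{z}_T + a_j)$. This is a convex combination of two strictly positive numbers with nonnegative coefficients summing to one, so it is strictly positive. Since the argument applies to every $j$, the point $\theta\textbf{x}+(1-\theta)\textbf{z}$ satisfies all the inequalities defining $C$ and therefore lies in $C$, which is precisely the asserted statement.

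The argument is essentially routine once the support is expressed through linear inequalities, so I do not anticipate a genuine obstacle. The one step requiring care is the initial reduction: one must use \emph{both} the assumption $c\neq 0$ and the nonnegativity of $\sigma$ to conclude that $r(\textbf{x})\neq 0$ forces each individual ReLU output to be strictly positive, since a single zero argument would drive the minimum, and hence the entire rule value, to zero. The only other bookkeeping subtlety is splitting the affine bias $a_j$ across the two endpoints so that the affine (not merely linear) form behaves correctly under convex combination. Note finally that the proposition as stated asserts only convexity; the surrounding claim that $C$ is \emph{compactly} supported is a distinct property and would need a separate boundedness argument, which is not required here.
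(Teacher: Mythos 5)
Your proposal is correct and follows essentially the same route as the paper: rewrite the support condition as the system of strict inequalities $\textbf{w}_j^T \textbf{x}_T + a_j > 0$ for all $j$ and verify that the affine form at $\theta\textbf{x}+(1-\theta)\textbf{z}$ is a convex combination of strictly positive numbers. Your version is slightly more careful in justifying the equivalence via $c\neq 0$ and the nonnegativity of $\sigma$, and in handling the endpoint cases $\theta\in\{0,1\}$ that the paper's multiplication by $\theta>0$ and $(1-\theta)>0$ technically skips, but these are cosmetic differences.
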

\begin{proof} Given any $\textbf{x} \in C$, we have from equation \eqref{eq:neuralrule}, 
	$$r(\textbf{x}) \neq 0 \iff \textbf{w}_j^T {\textbf{x}} + a_j > 0\hspace{2mm} \forall j$$ 
	By multiplying both sides by $\theta > 0$, we get
	\begin{equation}
	\textbf{w}_j^T \theta{\textbf{x}} + \theta a_j > 0\hspace{2mm} \forall j \label{eq:convex1}
	\end{equation}
	Similarly, we have for any $\textbf{z} \in C$,
	$$r(\textbf{z}) \neq 0 \iff \textbf{w}_j^T {\textbf{z}} + a_j > 0 \hspace{2mm}  \forall j$$ 
	Multiplying both sides with $(1-\theta) > 0$, 
	\begin{equation}
	\textbf{w}_j^T (1-\theta){\textbf{z}} + (1-\theta)a_j >0 \hspace{2mm}  \forall j \label{eq:convex2}
	\end{equation}
	Adding equations \eqref{eq:convex1} and \eqref{eq:convex2}, we obtain
	$$ \textbf{w}_j^T (\theta{\textbf{x}} + (1-\theta){\textbf{z}}) + a_j > 0 \hspace{2mm} \forall j$$
	which implies 
	$$ r(\theta{\textbf{x}} + (1-\theta){\textbf{z}}) \neq 0 \iff \theta{\textbf{x}} + (1-\theta){\textbf{z}} \in C.$$
\end{proof}

\subsection{Training}
Conventional procedures for generating decision tress on binary classification tasks employ either the Gini index or the cross entropy measure. 
We use a new splitting criterion for invoking a decision tree based on margin maximizing rules discussed in \secref{sec:MMR}. 
Let $n$ denote the number of examples. We use  subscript $l$ to refer to the left child, $r$ for the right child and $p$ for the parent node. 
Additionally, superscripts $+$ and $-$  refer to positive and negative examples, respectively.

Assuming binary partitioning of a decision node,  each split defines two simple rules $r_l(\textbf{x})$ and $r_r(\textbf{x})$. 
Using the  maximum margin metric for a rule given in \eqref{eq:mksquare}, the node splitting criterion can be written as follows
\begin{equation}\label{eq:treesplit}
I = \frac{(n_l^{+} - n_l^{-})^2}{n_l} + \frac{(n_r^{+} - n_r^{-})^2}{n_r} -  \frac{(n_p^{+} - n_p^{-})^2}{n_p} 
\vspace{-1mm}
\end{equation}

We decompose a single decision tree  into a set of conjunctive rules to obtain a pool of diverse feature interactions. 
Each of these rules is used to initialize their neural counterparts using mapping discussed in Section \ref{sec:nreinit}. 
Such an ensemble of neural rules, collectively referred to as Neural Rule Ensembles (NRE) is essentially a 2-layered artificial neural network with min pooling operation and thus, a universal approximator \cite{hornik1991approximation}. 
However, in the proposed approach, feature interactions extracted from a decision tree are  explicitly encoded into the network through its initialization, thus performing feature selection and leading to better generalization. 
Another characteristic of such an initialization is that the activations of any two pooled hidden units are orthogonal to each other.  

After initializing the network, all the parameters are trained using Backpropagation \cite{backprop}. 
We use the Adam optimization method \cite{adamOptimization} with learning rate $\alpha = 0.01$ to calculate the weight updates. 

\begin{algorithm}[htb]
\caption{ Neural Rule Ensemble (NRE) Training}
\label{alg:algorithm}
\textbf{Input}: Training data $\{(\mathbf{x_i},y_i)\}_{i=1}^N$\\
\textbf{Parameters}: Learning rate $\alpha = 0.01$, number of training epochs $N^{ep}$\\
\textbf{Output}: Trained Neural Rules Ensemble

\begin{algorithmic}[1] 
		\STATE Standardize the training data $\{(\mathbf{x_i},y_i)\}_{i=1}^N$: mean centering and unit standard deviation 
		\STATE Build a decision tree using the splitting criterion from Eq. \eqref{eq:treesplit}
		\STATE Decompose the resulting tree into a set of conjunctive rules
		\STATE Map each tree-induced rule into a corresponding neural rule 
		\STATE Initialize each neural rule as detailed in Section (IV-$A$).
		\STATE  Train an ensemble of neural rules simultaneously using backpropagation 
\STATE \textbf{return} Ensemble of Neural Rules
\end{algorithmic}
\end{algorithm}

\section{Experiments}

\subsection{Simulation Result}

In this section, we perform a simulation to illustrate the ability of a neural rule to evolve and expand its activated region. 
We consider a rotated XOR dataset, which is a non-linearly separable dataset since there does not exist any single hyperplane that can separate the positive training examples (shown in blue) from the negative ones (shown in red). 
Additionally, since we have rotated the XOR dataset  by $45^{\circ}$,  tree-based approaches such as Gradient boosted trees would have a hard time approximating the oblique decision boundaries and would require a large number of trees and/or rules. 

\figref{fig:evolutionNeuralRule} (a) shows a single neural rule just after its initialization from a corresponding tree-induced rule. \figref{fig:evolutionNeuralRule} (b) shows an intermediary state after training for 150 iterations. 
It can be seen that the rule evolves its activated region to include more examples of the same type into its support. 
After training for a long time, the neural rule settles into an equilibrium state consisting of only positive examples as shown in \figref{fig:evolutionNeuralRule} (c).  
\begin{figure}[t]
	\begin{center}	
	\begin{tabular}{ccc}	
	\includegraphics[width=0.26\linewidth]{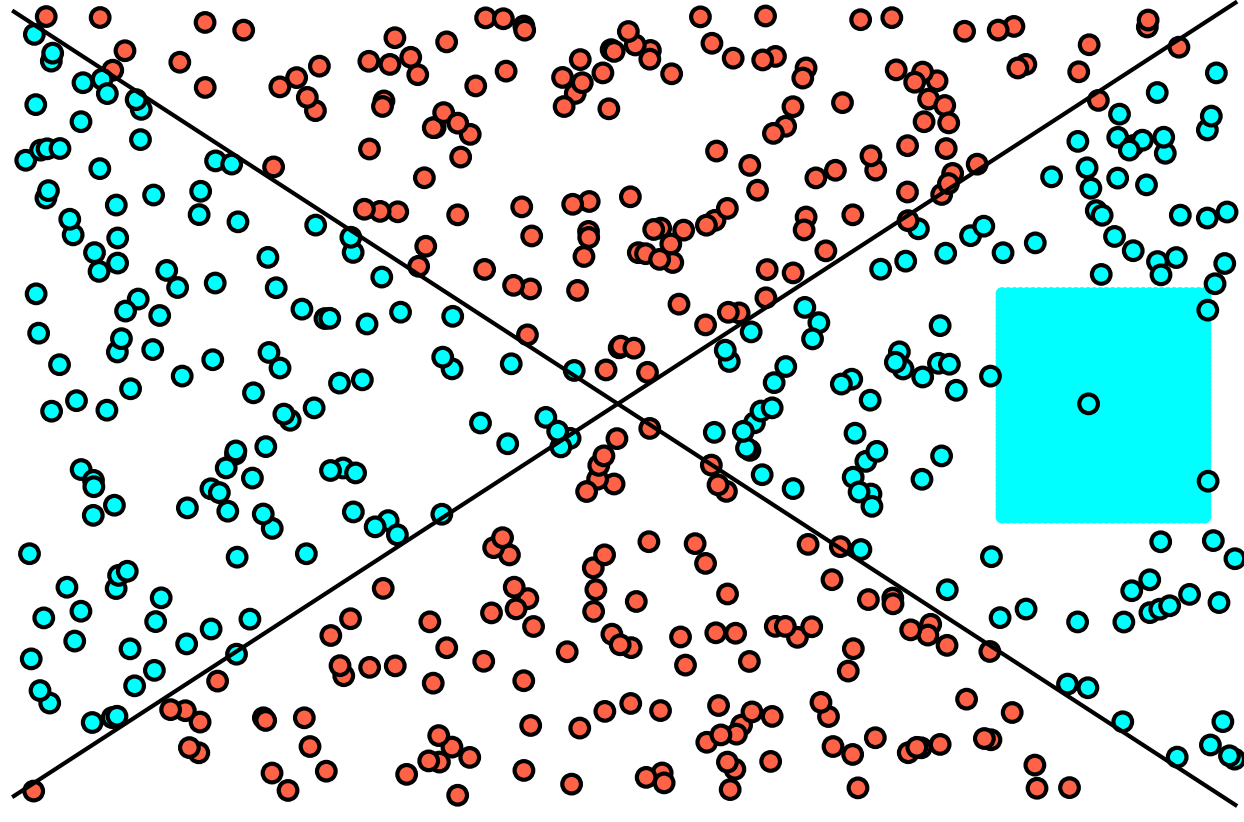}
	&\includegraphics[width=0.26\linewidth]{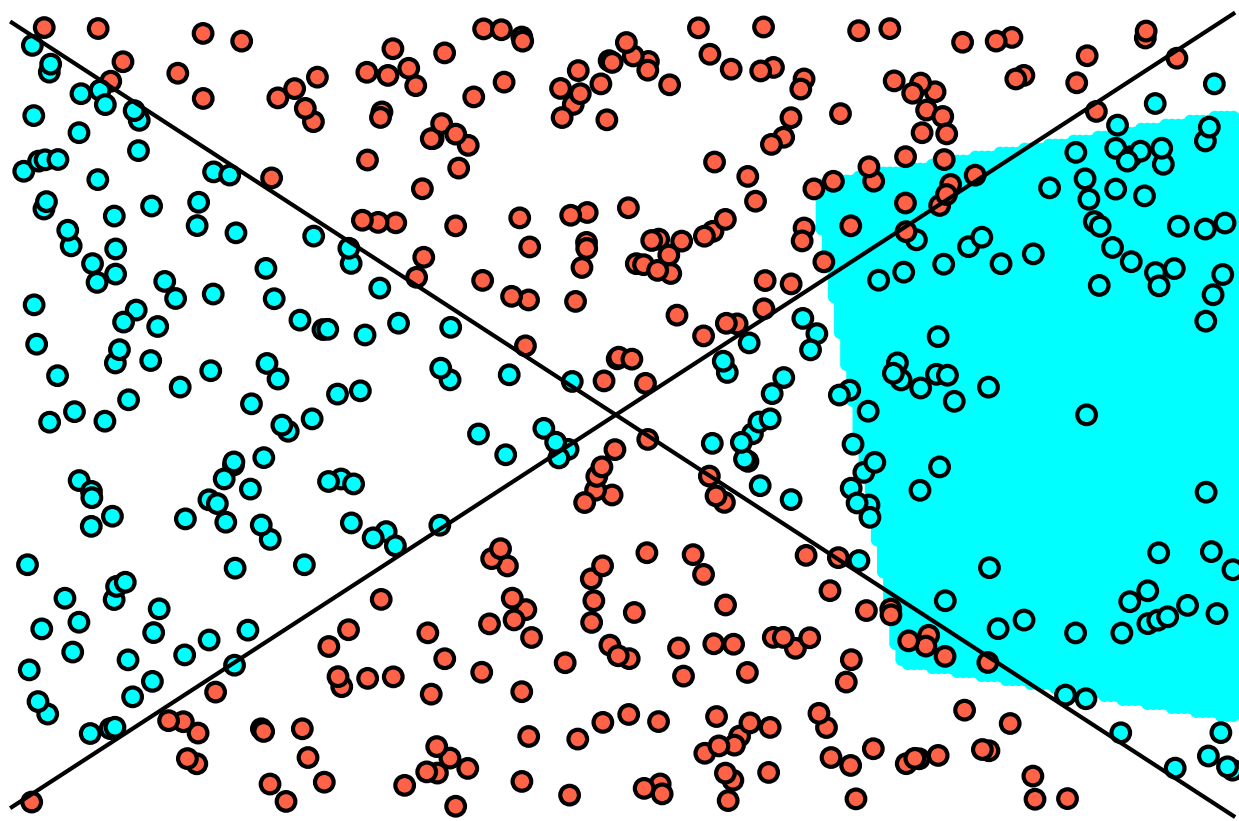}
	&\includegraphics[width=0.26\linewidth]{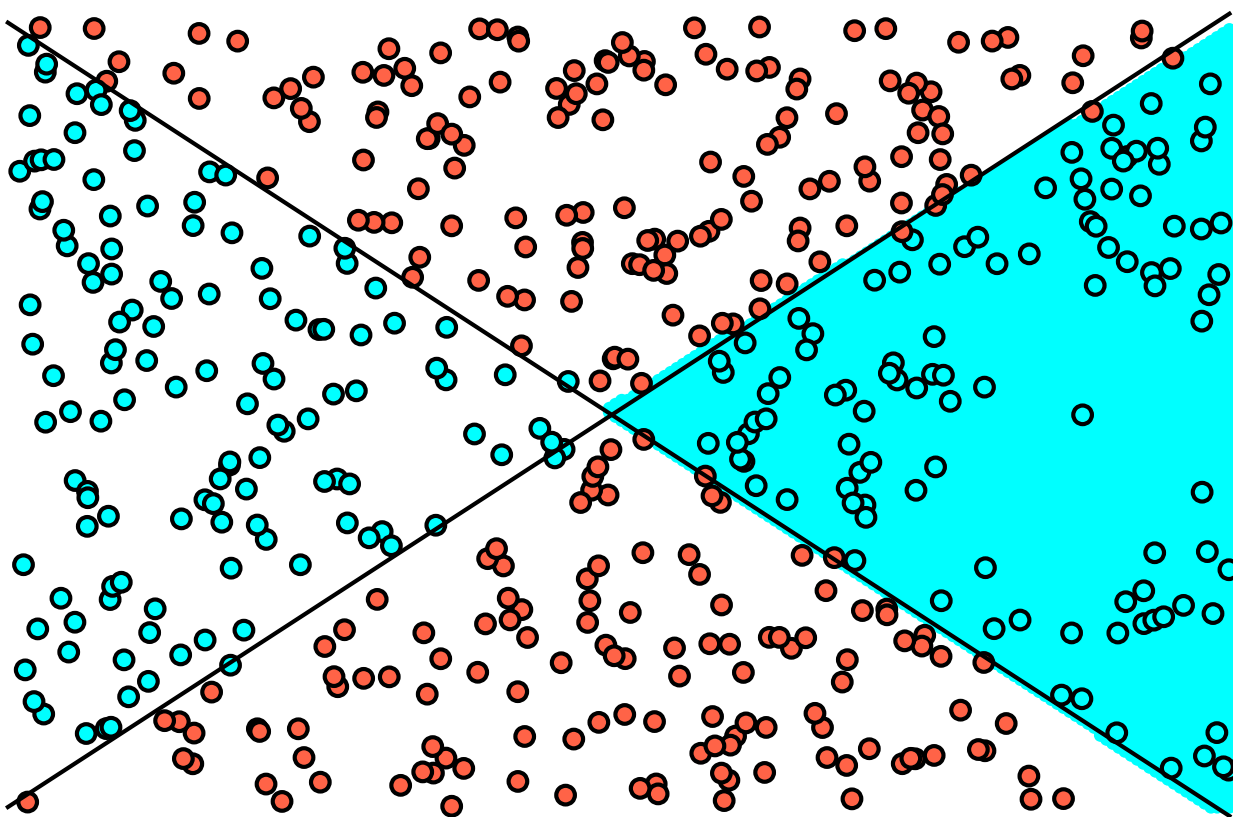}\\
	Initialization & After 150 Iterations & After 3k Iterations
	\end{tabular}
	\end{center}
	\vspace{-4mm}
	\caption{Neural Rule: Evolution of the trainable support of a single rule with time}\label{fig:evolutionNeuralRule}
	\vspace{-2mm}
\end{figure}

It is evident from \figref{fig:evolutionNeuralRule} (c) that there are still many positive training examples that do not belong to the support of a neural rule. 
In order to include them, a neural rule would have to assume a non-convex shape, which is not possible. 
This limitation has motivated the extension to a deep neural rule. 
\figref{fig:evolutionDeepNeural} shows the evolution of the activated region in the case of a deep neural rule, which can now achieve a non-convex shape and hence contain all the positive training examples into its support as shown in \figref{fig:evolutionDeepNeural} (c).

\begin{figure}[h!]
	\vspace{-3mm}
	\centering
	\begin{center}	
	\begin{tabular}{ccc}	
	\includegraphics[width=0.26\linewidth]{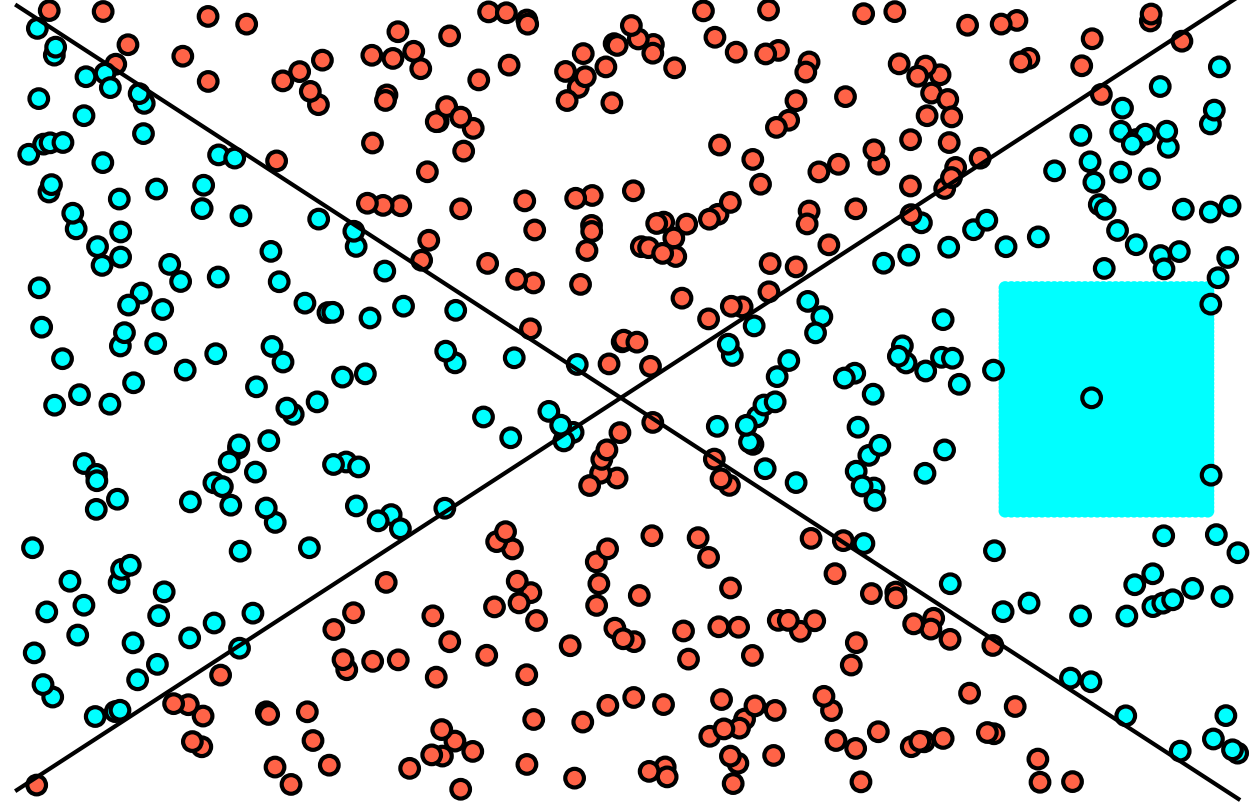}
	&\includegraphics[width=0.26\linewidth]{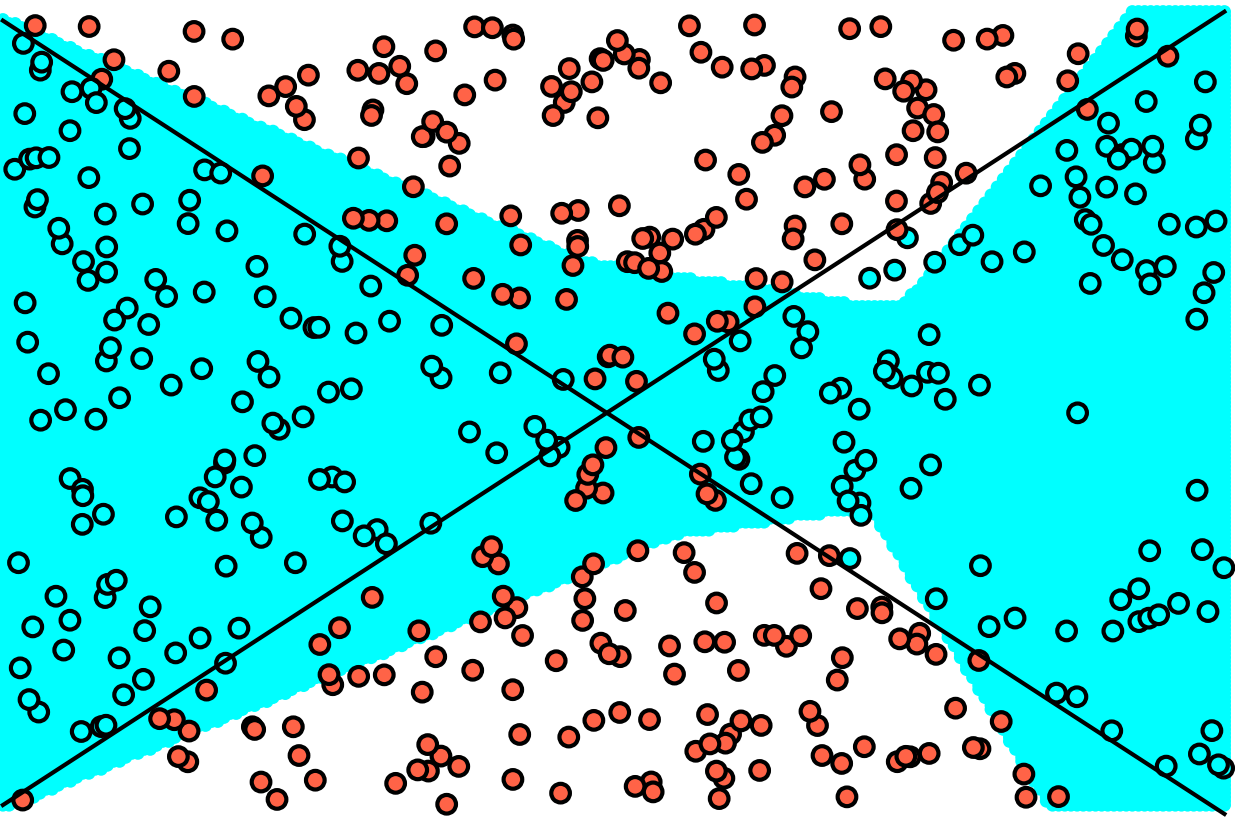}
	&\includegraphics[width=0.26\linewidth]{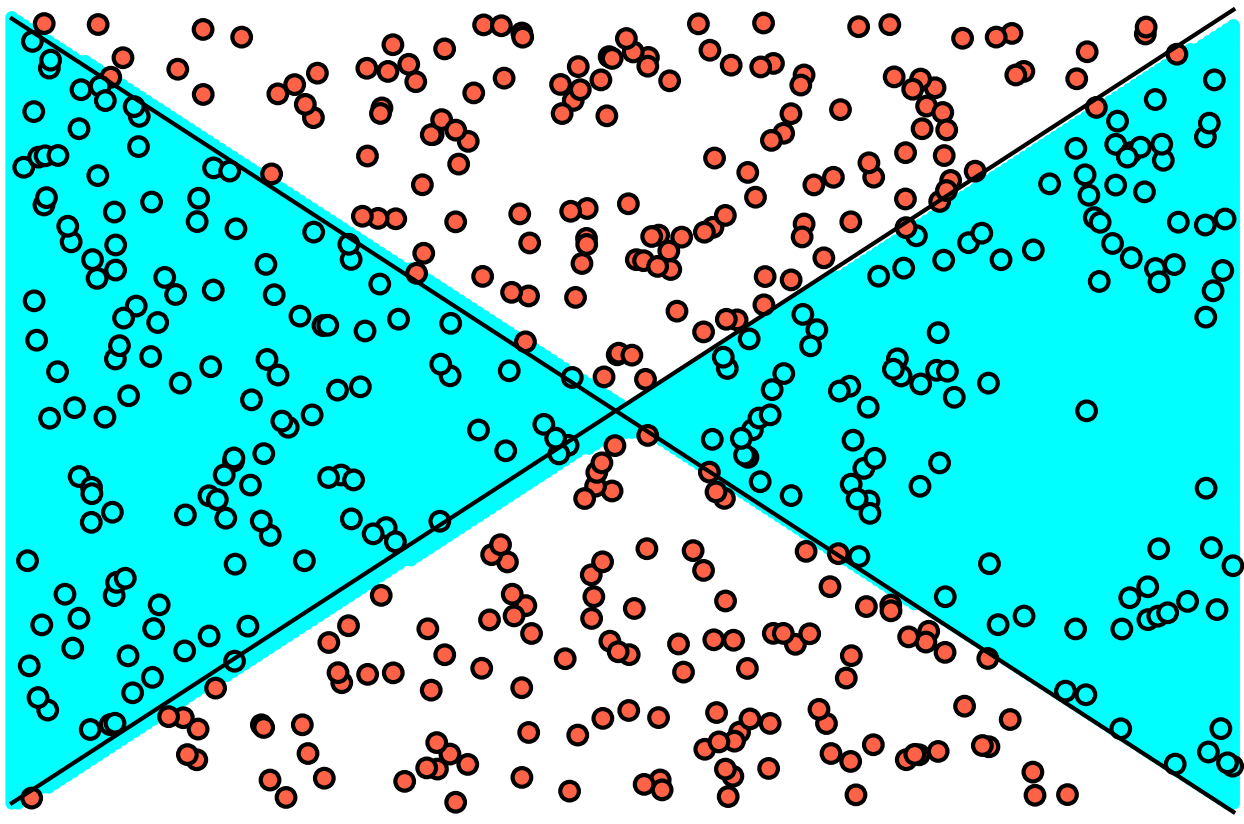}\\
	Initialization & After 150 Iterations & After 3k Iterations
	\end{tabular}
	\end{center}
	\vspace{-4mm}
	\caption{Deep Neural Rule: Evolution of the trainable support of a single rule with time}\label{fig:evolutionDeepNeural}
	\vspace{-1mm}
\end{figure}

\subsection{Real Data Analysis}
\noindent{\bf Datasets}. In order to compare the performance of the proposed algorithm with state of the art approaches, we perform an empirical evaluation on simulated and multiple real datasets, which ensures a wide variety of different targets in terms of their dependence on the input features. 

For simulation, we use a highly non-linear and multivariate artificial dataset, MADELON, featured in the NIPS 2003 feature selection challenge \cite{guyon2004result}. 
It is a  generalization of the classic XOR dataset to five dimensions. 
Each vertex of a five dimensional hypercube contains a cluster of data points randomly  labeled as $+1$ or $-1$. 
The five dimensions constitute $5$ informative features and  $15$ linear combinations of those features were added to form a set of 20 redundant but informative features.  
Additionally, a number of distractor features with no predictive power were added and the order of the features was randomized.

For benchmarking on real datasets, we will use Penn Machine Learning Benchmark (PMLB) \cite{Olson2017PMLB} which includes  datasets from a wide range of sources such as UCI ML repository \cite{Dua:2017}, Kaggle, KEEL \cite{Alcala-FdezFLDG11} and the meta-learning benchmark \cite{Reif12}. 
Since we are limiting our focus to binary classification tasks, we only consider datasets having two classes. 
Additionally, we removed all the datasets with fewer than $2000$ training examples. This leaves us with a total of $19$ datasets for our investigation.

\noindent{\bf Statistical Tests}. We use a statistical framework for hypothesis testing to investigate whether Neural Rule Ensembles (NRE) is significantly better or not compared to state of the art classifiers, namely Random Forests (RF), Gradient boosted trees (GB) and Artificial Neural Networks (ANN). 
A hypothesis test is a decision between two complementary hypotheses, the null hypothesis $H_0$ and the alternate hypothesis $H_1$. 
We are trying to reject the null hypothesis, which states that there is no difference in the classification performance of algorithms, that is,  both of them perform equally well. 
We use the following statistical tests designed to compare two classifiers on multiple data sets.

\noindent{\bf Wilcoxon Signed-Rank Test}.
For the Wilcoxon signed-rank test \cite{wilcoxon}, the results are sorted by the magnitude of absolute difference in the performance scores of the two classifiers. 
This is followed by assigning ranks from the lowest to the highest absolute difference. In case of ties, average ranks are assigned. 
Finally, a test statistic is formed based on the ranks of the positive and negative differences.

  \begin{table}[t]
	\vspace{-2mm}
	\centering
	\begin{tabular}{lcc|cc}

		{} &     GB &    NRE &  difference &  rank \\ 	\hline

		wilt             &  18.60 &  \textbf{10.40 }&        8.20 &  19.0  \\
		madelon          &  14.50 &  \textbf{10.30} &        4.20 &  18.0  \\
		adult            &  \textbf{12.91 }&  14.22 &       -1.31 &  17.0  \\
		phoneme          &   9.25 &  \textbf{ 8.14} &        1.11 &  16.0  \\
		dis              &\textbf{   0.71} &   1.77 &       -1.06 &  15.0  \\
		titanic          &  27.49 & \textbf{ 26.89} &        0.60 &  14.0  \\
		churn            &   \textbf{3.60} &   4.13 &       -0.53 &  13.0  \\
		banana           &   9.31 &   \textbf{8.93} &        0.38 &  12.0  \\
		ring             &   \textbf{3.15} &   3.51 &       -0.36 &  11.0  \\
		spambase         &   \textbf{4.34} &   4.63 &       -0.29 &  10.0  \\
		kr-vs-kp         &   0.42 &   \textbf{0.20} &        0.22 &   9.0  \\
		chess            &   \textbf{0.21} &   0.42 &       -0.21 &   7.5  \\
		coil2000         &   6.04 &  \textbf{ 5.83} &        0.21 &   7.5  \\
		twonorm          &   2.34 &   \textbf{2.25} &        0.09 &   6.0  \\
		clean2           &   0.00 &   0.00 &        0.00 &   3.0  \\
		hypothyroid      &   1.47 &   1.47 &        0.00 &   3.0  \\
		agaricus-lepiota &   0.00 &   0.00 &        0.00 &   3.0  \\
		magic            &  11.67 &  11.67 &        0.00 &   3.0  \\
		
		mushroom         &   0.00 &   0.00 &        0.00 &   3.0  \\
		
		\hline
		
		 wins & 6 &\textbf{8} & &   \\
		
		 ties & 5 & 5 & &   \\	
	\end{tabular}
\vspace{+2mm}
	\caption{Comparison of the test error performance of NRE with Gradient Boosted trees (GB) on binary classification tasks}\label{tab:GBNRE}
\vspace{-5mm}
\end{table}
Let $d_i$ be the difference between the performance scores of  two classifiers on the $i^{th}$ data set.  
Let $R^+$ be the sum of ranks for the data sets on which NRE outperforms the other classifier, and $R^{-}$ the sum of ranks on data sets where NRE gets defeated. 
Ranks corresponding to zero difference are split evenly between $R^+$ and $R^{-}$; if there is an odd number of them, one is ignored.  The test statistic, $T$ is given by

\vspace{-3mm}
 \begin{equation}
 T = \min(R^+,R^{-}) \\
\vspace{-3mm}
 \end{equation}

 where
 
\vspace{-3mm}
 \begin{equation}
 R^+ = \sum_{d_i>0} \rank(d_i) + \frac{1}{2} \sum_{d_i=0} \rank(d_i) \hspace{5mm}
\vspace{-1mm}
 \end{equation}
\vspace{-1mm}
 \begin{equation}
 R^{-} = \sum_{d_i<0} \rank(d_i) + \frac{1}{2} \sum_{d_i=0} \rank(d_i) \\\vspace{2mm}
\vspace{-2mm}
 \end{equation}

 For a two-tailed test with $\alpha =0.05$ significance level, the critical value of the test statistic corresponding to $n=19$ data sets is $46$. 
 In other words, if $T$ is less than or equal to $46$, NRE can be considered significantly better than the other classifier with $p < 0.05$ and we can reject the null-hypothesis in favor of alternate one.  

\noindent{\bf Sign Test: Wins, Losses \& Ties Counts}.
The sign test \cite{Salzberg1997,Sheskin:2007:HPN:1529939} is much weaker than the Wilcoxon signed-rank test and will not reject the null-hypothesis unless one algorithm almost always outperforms the other. 
In the sign test, we compare the generalization performance of classifiers by counting the number of data sets on which a classifier outperforms others.

Under the assumption that null-hypothesis is correct, that is, both classifiers perform equally well, one would expect each one of them to win on approximately $N/2$ out of $N$ data sets. 
This tell us that the number of wins is distributed according to the binomial distribution. 

For $19$ datasets, the critical number of wins needed to reject the null-hypothesis for a two-tailed sign test at $\alpha = 0.05$ significance is $14$. 
This implies that NRE can be considered significantly better than the other classifier with $p < 0.05$ if it is the overall winner on $14$ out of $19$ datasets. 
Since null-hypothesis is true for ties, instead of throwing them, we distribute them evenly between the two classifiers. 
And, we ignore one of the ties if there is an odd number of them.   
\begin{table}[t]
	\vspace{-2mm}
	\centering
	\begin{tabular}{lcc|cc}		
		 
		{} &     RF &    NRE &  difference &  rank   \\
		
		\hline
		
		madelon          &  26.40 &  \textbf{10.30} &       16.10 &  19.0   \\
		wilt             &  21.60 &  \textbf{10.40} &       11.20 &  18.0   \\
		coil2000         &   7.06 &   \textbf{5.83} &        1.23 &  17.0   \\
		phoneme          &   9.00 &   \textbf{8.14} &        0.86 &  16.0   \\
		banana           &   9.56 &   \textbf{8.93} &        0.63 &  15.0   \\
		titanic          &  27.49 &  \textbf{26.89 }&        0.60 &  14.0   \\
		spambase         &   4.92 &   \textbf{4.63} &        0.29 &  13.0   \\
		twonorm          &   2.52 &   \textbf{2.25} &        0.27 &  12.0   \\
		adult            &  14.47 &  \textbf{14.22} &        0.25 &  11.0   \\
		kr-vs-kp         &   0.42 &   \textbf{0.20} &        0.22 &  10.0   \\
		magic            &  11.88 &  \textbf{11.67 }&        0.21 &   9.0   \\
		hypothyroid      &   1.68 &   \textbf{1.47} &        0.21 &   8.0   \\
		chess            &   0.62 &  \textbf{ 0.42} &        0.20 &   7.0   \\
		ring             &   \textbf{3.33 }&   3.51 &       -0.18 &   6.0   \\
		agaricus-lepiota &   0.00 &   0.00 &        0.00 &   3.0   \\
		mushroom         &   0.00 &   0.00 &        0.00 &   3.0   \\
		dis              &   1.77 &   1.77 &        0.00 &   3.0   \\
		clean2           &   0.00 &   0.00 &        0.00 &   3.0   \\
		
		churn            &   4.13 &   4.13 &        0.00 &   3.0   \\
		
		\hline						
		wins & 1&\textbf{13 }& &   \\		
		ties & 5 & 5 & &   \\		
	\end{tabular}
\vspace{+2mm}
	\caption{Comparison of the test error performance of NRE with Random Forests (RF) on binary classification tasks}\label{tab:RFNRE}
	\vspace{-5mm}
\end{table}

\noindent{\bf Test Error Evaluation}. In this section, we compare NRE with GradientBoost (GB), Random Forest (RF) and Artificial Neural Networks (ANN) on $19$ datasets. The test errors for the datasets without a test set are obtained using five-fold cross-validation. \\
For each classifier, the operating settings and the tuned hyperparameters are the following:
\begin{itemize}

	\item \textbf{{Random Forests:} }The number of trees used in the forest are tuned from the set $k \in \{32,64,128,256,512\}$.
	\item \textbf{{Gradient Boosted Trees:} }We use $100$ boosting iterations with the maximum tree depth $d$ selected from the range  $d\in \{2,4,6,8,10\}$.
	\item \textbf{Artificial Neural Networks:} Fully connected networks with a single hidden layer (since NRE contains one hidden layer) and rectified linear (ReLU) activation.  The number of hidden units $h \in \{64,128,256,512,1024\}$ is selected for optimal performance. 
	\item {\textbf{Neural Rule Ensembles:}} Maximum depth of the tree used for initializing the network is searched over the set $d\in \{2,4,6,8,10\}$.
\end{itemize}

The hyperparameters for the methods being evaluated have been obtained by internal five-fold cross-validation on the training set. We use the scikit-learn implementation for evaluating the existing algorithms.  
\begin{table}[t]
	\vspace{-2mm}
	\centering

	\begin{tabular}{rrr|rr}		
		 			 
		{} &    ANN &    NRE &  difference &  rank  \\				\hline
		
		madelon          &  45.50 &  \textbf{10.30} &       35.20 &  19.0  \\
		phoneme          &  14.18 &   \textbf{8.14} &        6.04 &  18.0  \\
		wilt             &  14.20 &  \textbf{10.40} &        3.80 &  17.0  \\
		churn            &   6.27 &   \textbf{4.13} &        2.14 &  16.0  \\
		coil2000         &   7.46 &  \textbf{ 5.83} &        1.63 &  15.0  \\
		spambase         &   \textbf{3.47 }&   4.63 &       -1.16 &  14.0  \\
		ring             &   \textbf{2.52} &   3.51 &       -0.99 &  13.0  \\
		magic            &  12.44 &  \textbf{11.67} &        0.77 &  12.0  \\
		adult            &  14.79 &  \textbf{14.22} &        0.57 &  11.0  \\
		hypothyroid      &   1.89 &   \textbf{1.47} &        0.42 &   9.5  \\
		kr-vs-kp         &   0.62 &   \textbf{0.20} &        0.42 &   9.5  \\
		banana           &   9.31 &  \textbf{ 8.93} &        0.38 &   8.0  \\
		twonorm          &   2.43 &   \textbf{2.25} &        0.18 &   7.0  \\
		dis              &   1.94 &   \textbf{1.77} &        0.17 &   6.0  \\
		agaricus-lepiota &   0.00 &   0.00 &        0.00 &   3.0  \\
		mushroom         &   0.00 &   0.00 &        0.00 &   3.0  \\
		clean2           &   0.00 &   0.00 &        0.00 &   3.0  \\
		chess            &   0.42 &   0.42 &        0.00 &   3.0  \\
			
		titanic          &  26.89 &  26.89 &        0.00 &   3.0  \\
		
		\hline
				
		wins & 2&\textbf{12 }& &   \\
			
		ties & 5 & 5 & &   \\
		
	\end{tabular}
\vspace{+2mm}
	\caption{Comparison of the test error performance of NRE with Artificial Neural Networks (ANN) on binary classification tasks}\label{tab:ANNNRE}
	\vspace{-5mm}
\end{table}

\noindent{\bf NRE vs Gradient Boosted Trees}. From Table \ref{tab:GBNRE}, it can be seen that NRE wins on $8$ data sets, GB wins on $6$ data sets and there are $5$ ties. 
Ignoring one tie and splitting the remaining ones evenly, we find that NRE is better on $10$ out of $19$ datasets.  
Since the critical number of wins needed under sign test is 14, we fail to reject the null-hypothesis. 
Similarly, we fail to reject the null-hypothesis under the Wilcoxon signed-rank test because the test statistic  $T = \min(R^+,R^{-}) = \min(109,81) = 81 $ is greater than 46. 
This implies that we don't have enough statistical evidence to establish that NRE outperforms GB. 
However, we realize that NRE initialized from a single tree gives a tough competition to $100$ boosted trees and is a more compactly represented  model.

\noindent{\bf NRE vs Random Forest}. We find from Table \ref{tab:RFNRE} that NRE outperforms RF on almost all the data sets except for the ring data set and the 5 tied matches. 
Splitting the ties evenly, NRE is better on $15$ out of $19$ data sets which is greater than the critical number of wins needed, that is $14$, under the sign test. We can therefore reject the null hypothesis. 
For Wilcoxon-signed ranks test, the statistic $T = \min(R^+, R^{-}) = \min (176.5,13.5) = 13.5$ is less than the critical value $46$ which allows us to reject the null hypothesis as well.  
This implies that NRE is significantly better than Random Forest and given that it utilizes only one tree compared to the up to $500$ trees in RF, it is more compact too. 

\noindent{\bf NRE vs Artificial Neural Network}. It is evident from Table \ref{tab:ANNNRE} that NRE outperforms ANN on $12$ data sets, loses on $2$ sets and there are $5$ ties. 
NRE passes  the sign test since it is better on $14$ data sets (splitting the ties evenly) which matches the critical number of wins needed.  
Since, the test statistic for the Wilcoxon signed-rank test $T = \min(R^+, R^{-}) = \min (155.5,34.5) = 34.5$ is less than the critical value $46$, we reject the null-hypothesis in favor of alternate one. Both of the statistical tests agree that NRE is significantly better than the Artificial Neural Networks.

\begin{table}[t]
	\vspace{-2mm}
	\centering
	\begin{tabular}{lcccccc}

		Dataset &$N$ & $p$    & GB &    RF &  ANN &  NRE \\ 	\hline

		wilt       &4839 &6      &  18.60 & 21.60& 14.20& \textbf{10.40} \\
		madelon    &2600 &500      &  14.50 & 26.40& 45.50 & \textbf{10.30}   \\
		phoneme    &5404     &6 &  9.25 &9.00 & 14.18 &\textbf{8.14} \\
		kr-vs-kp   &3197  &37   &   0.42 &0.42  &0.62 &\textbf{0.20} \\
		coil2000   &9822 &86     &   6.04 &7.06 &7.46 &\textbf{5.83}\\
        banana     &5300 &3      &   9.31 & 9.56 &9.31  &\textbf{8.93}  \\
        twonorm    &7400    &21      &   2.34 &2.52 &2.43 &\textbf{2.25} \\        
        adult      &48842    &15  &   \textbf{12.91 }& 14.47& 14.79 &  14.22 \\
		dis        &3772 &30     &   \textbf{0.71} &1.77 &1.94 &  1.77 \\
		churn      &5000 &21     &   \textbf{3.60} &4.13& 6.27 &4.13\\
		ring       &7400 &21      &   \textbf{3.15} &3.33 &2,52 &3.51 \\
		spambase   &4601 &58      &   \textbf{4.34} & 4.92 &3.47 & 4.63  \\
		chess      &3196 &37      &   \textbf{0.21} &0.62 &0.42 &0.42 \\
		titanic     &2201 &4     &  27.49 & 27.49 &\textbf{26.89} &  \textbf{26.89}  \\
		hypothyroid  &3163 &26    &  \textbf{1.47} &1.68 &1.89  &\textbf{1.47} \\
		magic         &19020 &11   &  \textbf{11.67} &11.88 &12.44  &\textbf{11.67}  \\
		mushroom       &8124 &23  & \textbf{0.00} &\textbf{0.00}  &\textbf{0.00} &\textbf{0.00} \\
		clean2         &6598 &169  &  \textbf{0.00} &\textbf{0.00}  &\textbf{0.00} &\textbf{0.00}  \\
		agaricus-lepiota &8145 &23 &   \textbf{0.00} &\textbf{0.00}  &\textbf{0.00} &\textbf{0.00} \\
		\hline
		wins & & & 11 &3 &4 &13   \\
	\end{tabular}
	\caption{Comparison of the test error performance of NRE with Gradient Boosted trees (GB), Random Forests (RF) and Artificial Neural Networks (ANN) on binary classification tasks}\label{tab:comparison}
	\vspace{-5mm}
\end{table}

\noindent{\bf Overall comparison}. 
 In Table \ref{tab:summary} is shown a summary of the Wilcoxon rank T statistics and the number of NRE wins vs the other methods, with their significance in bold. 
 In Table \ref{tab:comparison}, are shown all the classification test errors for all the methods in a single table. Also shown are the number of observations $N$ and the number of features $p$ of each dataset. 

\begin{table}[t]
\vspace{+3mm}
	\centering
	\begin{tabular}{lccc}		
		\hline
&NRE vs GB &NRE vs RF &NRE vs ANN  \\
		\hline
	Wilcoxon T Statistic	&81 &{\bf 13.5} &{\bf 34.5}\\
	Number of NRE wins  	&10 &{\bf 15} &{\bf 14}\\
		\hline
	\end{tabular}
\vspace{+2mm}
	\caption{Summary results comparing the NRE with GB, RF and ANN. }\label{tab:summary}
\vspace{-6mm}
\end{table}

\section{Conclusion}

In this work, we presented a novel method called Neural Rule Ensembles (NRE) for encoding into a neural network and refining the feature interactions captured by a decision tree. 
This was achieved by defining a neural transformation of a tree-induced rule using ReLU units and the min pooling operation. 
Such a mapping addresses the initialization related concerns of fully connected neural networks as well as the feature selection problem, and enables learning of compact  representations compared to conventional tree-based approaches.

 Empirical evaluations on $19$ binary classification datasets from the Penn Machine Learning Benchmark (PMLB) \cite{Olson2017PMLB} were performed to compare the generalization performance of Neural Rule Ensembles (NRE) with state of the art approaches such as Random Forests (RF), Gradient Boosted Trees (GB) and Artificial Neural Networks (ANN). 
We used two statistical tests, the Wilcoxon signed-rank test and the sign test, to evaluate the statistical significance of these results.  
Both of these statistical tests found NRE to be significantly better than Random Forests and the Artificial Neural Networks  with $p < 0.05$.  
When NRE was compared to Gradient Boosted Trees, we could not find enough statistical evidence to reject the null hypothesis stating that both of them perform equally well.  
However, NRE only utilizes one tree, so it obtains a more compact and interpretable representation.

\bibliographystyle{IEEEtran}
\bibliography{IEEEabrv, IEEEexample}

\end{document}